\newcommand{\A}{\mathcal{A}}
\newcommand{\PP}{\mathbb{P}}
\newcommand{\EE}{\mathbb{E}}
\newcommand{\EEp}[1]{\mathbb{E}\left[#1\right]}
\newcommand{\R}{\mathbb{R}}
\newcommand{\real}{\mathbb{R}}
\newcommand{\one}[1]{\mathbb{I}_{\{#1\}}}
\newcommand{\bone}{\mathbf{1}}
\newcommand{\bD}{\mathbf{D}}
\newcommand{\bG}{\mathbf{G}}
\newcommand{\bH}{\mathbf{H}}
\newcommand{\bK}{\mathbf{K}}
\newcommand{\bk}{\mathbf{k}}
\newcommand{\bL}{\mathbf{L}}
\newcommand{\bv}{\mathbf{v}}
\newcommand{\eps}{\varepsilon}
\newcommand{\tJ}{\tilde{J}}
\newcommand{\ra}{\rightarrow}
\newcommand{\eqdef}{\stackrel{{\rm def}}{=}}
\newcommand{\be}{\begin{equation}}
\newcommand{\ee}{\end{equation}}
\newcommand{\ie}{\textit{i.e.,}}
\newcommand{\eg}{\textit{e.g.,}}
\newcommand{\LAs}{\underline{N}} 
\newcommand{\NAs}{\underline{M}} 
\newcommand{\CumLoss}{L}
\newcommand{\todoa}[2][]{\todo[color=red, #1]{#2}}
\newcommand{\todog}{\todo[color=green]}
\DeclareMathOperator{\im}{Im}
\DeclareMathOperator{\Ker}{Ker}
\DeclareMathOperator{\Int}{Int}
\newtheorem{proposition}{Proposition}
\newtheorem{remark}{Remark}
\newtheorem{defn}{Definition}
\begin{document}

\title{Non-trivial two-armed partial-monitoring games are bandits}
\author{Andr\'as Antos \\
Computer and Automation\\
Research Institute\\
Hungarian Academy of Sciences
\And 
G\'abor Bart\'ok
 \and Csaba Szepesv\'ari\\
Department of Computing Science\\
University of Alberta\\
Edmonton, Canada
}
\date{}
\maketitle

\begin{abstract}
We consider online learning in partial-monitoring games against an oblivious adversary.
We show that when the number of actions available to the learner is two and the game is nontrivial
 then it is reducible to a bandit-like game and thus the minimax regret is $\Theta(\sqrt{T})$.
\end{abstract}

\section{Introduction}
\if0
In this paper we consider the online learning...

\section{Learning in Partial-Monitoring Games}
\fi
The {\em partial-monitoring games} we consider are defined as follows:
Two players interact with each other in a sequential manner, {\em Learner} and {\em Nature}.
In each time step Learner can choose one of $N$ actions,
 while Nature can choose one of $M$ actions.
We use the notation $\underline{n}=\{1,\ldots,n\}$ for any integer
 and denote the actions of both players by integers, starting from $1$,
 so the action sets are $\LAs$ and $\NAs$.
At the beginning of the game both Learner and Nature are given a pair of $N\times M$ matrices,
 $\bG=(\bL,\bH)$,
 where $\bL$ is the  {\em loss matrix}
 and $\bH$ is the  {\em feedback matrix}.
The elements $\ell_{ij}$ of $\bL$ are real numbers and
 in fact we shall assume that they belong to the $[0,1]$ interval.
The elements $h_{ij}$ of $\bH$ could be chosen from any alphabet.
However, for the sake of simplicity, and without loss of generality (w.l.o.g.),
 we may assume that the elements of $\bH$ are also real numbers.
Now, still at the beginning of the game, Nature
 decides about the sequence of actions $(J_1,J_2,\ldots)$ to be played.
These actions are kept private, \ie\ they are not revealed to Learner.
Nature's actions will also be called \emph{outcomes}.

The game is played in discrete time steps.
At time step $t$ ($t=1,2,\ldots$),
 first Learner chooses an action $I_t$ based on the information available to him up to time $t$.
The choice of the action may be randomized.
Upon announcing his action,
 Learner gets the feedback $h_{I_t,J_t}$ and suffers the loss $\ell_{I_t,J_t}$.
The cycle is then repeated for time step $t+1$.
It is important to note that the loss suffered is not revealed to Learner.

The goal of Learner is to keep his cumulative loss
\[
 \CumLoss_T = \sum_{t=1}^T \ell_{I_t,J_t}
\]
small, where $T$ denotes the time horizon.
Learner's performance is evaluated
 by comparing his cumulative loss to the cumulative loss of the best fixed action from $\LAs$,
\[
 \CumLoss_T^* = \min_{i\in \LAs} \sum_{t=1}^T \ell_{i,J_t},
\]
 giving rise to the {\em cumulative expected regret} (or simply {\em regret}),
\[
 R_T(\A,\bG) = \EEp{\CumLoss_T-\CumLoss_T^*},
\]
 where $\A$ is the {\em strategy} Learner follows.
Note that in the definition of $\CumLoss_T^*$, the best fixed action is selected in hindsight.
When the growth rate of regret is sublinear in $T$,
 \ie\ the average regret $R_T/T$ converges to zero,
 in the long run, Learner can be said to perform as well as an oracle who can play this best action in hindsight.

The problem just described is of major importance in learning theory
 since it models a number of interesting scenarios including
 \emph{apple tasting}~\citep{BaPaSz10-ALT},
 a variant of \emph{label efficient prediction}, 
 and \emph{dynamic pricing}~\citep{CBLu06:book}.
\todo{Cs: Must add some interesting sounding examples..}
\todog{G: Ez a felsorolas igy eleg szerintem. Nem kene itt peldakat magyarazni ugyis lesz kesobb...}
\todoa{Hol?}
For further discussion and examples see Chapters 2--7 in the book by \citet{CBLu06:book}.

Given a game $\bG=(\bL,\bH)$, our goal is
 to find out the growth rate of the \emph{minimax regret} associated with $\bG$,
 and to design strategies that allow Learner to achieve this minimal growth rate.
Let the worst-case regret of algorithm $\A$ when used in $\bG$ for time horizon $T$ be
\[
 \bar{R}_T(\A,\bG)=\sup_{(J_1,\ldots,J_T)\in\NAs^T} R_T(\A,\bG),
\]
where the supremum is taken over all outcome sequences.
Formally, the {\em minimax regret} of game $\bG$ for time horizon $T$ is defined by
\[
 R_T^*(\bG) = \inf_\A \bar{R}_T(\A,\bG)
 = \inf_\A\sup_{(J_1,\ldots,J_T)\in\NAs^T} R_T(\A,\bG),
\]
where the infimum is taken over all strategies of Learner.
Note that, since for constant outcome sequences $R_T(\A,\bG)\ge 0$,
 also $\bar{R}_T(\A,\bG)\ge 0$ and $R_T^*(\bG)\ge 0$.
\begin{defn}
A game is called {\em trivial} if the minimax regret
 is either $0$ or scales linearly with the number of time steps.
\end{defn}
\begin{lemma}\label{lem:0minimaxregret}
The following three statements are equivalent:\\
a) The minimax regret is zero for each $T$.\\
b) The minimax regret is zero for some $T$.\\
c) There exists an action $i\in\LAs$
 whose loss is not larger than the loss of any other action
 irrespectively of the choice of Nature's action.
\end{lemma}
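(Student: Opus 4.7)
The implications $(c) \Rightarrow (a) \Rightarrow (b)$ are straightforward: given a dominating action $i^*$ (one satisfying (c)), the deterministic strategy that always plays $i^*$ yields $\CumLoss_T = \CumLoss_T^*$ for every outcome sequence, so $R_T^*(\bG) = 0$ for all $T$; and zero regret for all $T$ trivially implies zero regret for some $T$. The substance lies in $(b) \Rightarrow (c)$, which I prove contrapositively: assuming no action is dominating, I show $R_T^*(\bG) > 0$ for every $T \geq 1$, which rules out (b).

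For the $T = 1$ case, Learner has no feedback, so his strategy reduces to a distribution $p$ on $\LAs$, and
\begin{equation*}
R_1^*(\bG) \;=\; \min_{p} \max_{j \in \NAs} \left(\sum_{i \in \LAs} p_i\, \ell_{i,j} \;-\; \min_{k \in \LAs} \ell_{k,j}\right),
\end{equation*}
with the minimum attained by compactness of the simplex and continuity of the inner expression in $p$. If this value were $0$ at some minimizer $p^*$, then for every $j$ the convex combination $\sum_i p^*_i \ell_{i,j}$ would equal $\min_k \ell_{k,j}$, forcing $\ell_{i,j} = \min_k \ell_{k,j}$ for every $i$ in the support of $p^*$; any such $i$ would then be a dominating action, contradicting the assumption.

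To lift this to general $T$, I restrict Nature to constant sequences. Fix a Learner strategy $\A$ and let $p^{(t)}(j)$ denote the distribution of $I_t$ under $\A$ conditional on $J_1 = \cdots = J_T = j$. The regret against this sequence equals $\sum_{t=1}^T \bigl(\sum_i p^{(t)}_i(j)\, \ell_{i,j} - \min_k \ell_{k,j}\bigr)$, and each summand is nonnegative since $\min_k \ell_{k,j}$ lower bounds every convex combination of $\ell_{\cdot,j}$. The key observation is that $p^{(1)}$ is independent of $j$ (no feedback has been received at $t=1$), so Nature may choose $j$ to maximize the $t=1$ term alone, giving $\bar R_T(\A,\bG) \geq \max_j \bigl(\sum_i p^{(1)}_i \ell_{i,j} - \min_k \ell_{k,j}\bigr) \geq R_1^*(\bG)$. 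Taking the infimum over $\A$ yields $R_T^*(\bG) \geq R_1^*(\bG) > 0$, completing the contrapositive. The main subtlety in this plan is recognizing that dropping the $t \geq 2$ terms is lossless against constant outcomes and that the first-round marginal is automatically oblivious to $j$; together these two points let a one-step lower bound carry over to every horizon without additional work.
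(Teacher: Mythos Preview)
Your proposal is correct and follows essentially the same route as the paper: restrict Nature to constant sequences, discard the nonnegative $t\ge 2$ summands so that only the first-round term (which depends on $\A$ only through the distribution of $I_1$) remains, invoke compactness of the simplex to attain the infimum, and conclude that the minimizing distribution is supported on dominating actions. The only cosmetic differences are that you argue $(b)\Rightarrow(c)$ contrapositively and isolate the $T=1$ case as a separate lemma before lifting it, whereas the paper handles general $T$ in one pass; the ideas are identical.
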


The proof is in the Appendix.

\section{Previous work}
The growth rate of the minimax regret is strongly influenced by the choice of $\bL$ and $\bH$.
Consider, for example, the case of  so-called {\em full-information} games,
 where the feedback is sufficient for Learner to recover  Nature's action in each round.
In the simplest case, this is represented by $h_{ij}=j$.
However, from the point of view of the information content of feedback,
 we get an equivalent situation when each row of $\bH$ is composed of pairwise distinct elements.
The following result is known to hold for these games:
\begin{theorem}
\label{thm:fullinfo}
Consider a full-information game $\bG$ when Learner has $N$ actions.
Then there exists an algorithm $\A$ such that for any time horizon $T$,
 $\bar{R}_T(\A,\bG) \le \sqrt{(T/2)\ln N}$.
\end{theorem}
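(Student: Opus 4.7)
The plan is to invoke the standard exponentially weighted average forecaster (Hedge), since in the full-information setting the feedback $h_{I_t,J_t}$ reveals $J_t$ (rows of $\bH$ have pairwise distinct entries), hence after round $t$ Learner knows the full loss vector $(\ell_{i,J_t})_{i\in\LAs}$. I would therefore define exponential weights $w_{i,t}=\exp(-\eta \sum_{s<t}\ell_{i,J_s})$ for a tuning parameter $\eta>0$, draw $I_t$ from the distribution $p_{i,t}=w_{i,t}/W_t$ with $W_t=\sum_i w_{i,t}$, and then argue via the usual potential function $\Phi_t=\ln W_t$.

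The key steps would be: first, a lower bound $\Phi_{T+1}-\Phi_1 \ge -\eta \CumLoss_T^* - \ln N$, obtained by discarding all terms in $W_{T+1}$ except the one corresponding to the best fixed action in hindsight. Second, an upper bound on the per-round increment $\Phi_{t+1}-\Phi_t = \ln \EE_{i\sim p_{\cdot,t}}[\exp(-\eta \ell_{i,J_t})]$. Applying Hoeffding's lemma (using that $\ell_{i,J_t}\in[0,1]$) yields
\[
\Phi_{t+1}-\Phi_t \le -\eta \sum_i p_{i,t}\ell_{i,J_t} + \frac{\eta^2}{8}.
\]
Summing over $t$ and noting that $\EE[\ell_{I_t,J_t}\mid I_1,\ldots,I_{t-1}]=\sum_i p_{i,t}\ell_{i,J_t}$, one obtains
\[
-\eta\, \EE[\CumLoss_T] + \frac{T\eta^2}{8} \ge -\eta\, \CumLoss_T^* - \ln N,
\]
so $R_T(\A,\bG) \le \frac{\ln N}{\eta}+\frac{T\eta}{8}$. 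Optimizing with $\eta=\sqrt{8\ln N / T}$ yields the claimed bound $\sqrt{(T/2)\ln N}$, uniformly over outcome sequences.

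The argument is essentially textbook, so I do not anticipate a genuine obstacle; the only thing worth being careful about is the translation between feedback and outcome (justifying that Learner really can form the full loss vector from the row of $\bH$), and the application of Hoeffding's lemma to a bounded random variable whose distribution is $p_{\cdot,t}$ rather than i.i.d.\ data. Both are routine, and the same construction applies to any full-information $\bG$ regardless of $\bL$, which is why the bound depends only on $N$ and $T$.
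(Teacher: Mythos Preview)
Your proposal is correct and is precisely the standard Exponentially Weighted Average (Hedge) analysis that the paper has in mind; in fact the paper does not give its own proof of this theorem but simply attributes it to the Exponentially Weighted Average Forecaster with appropriate tuning, citing \citet[Corollary~4.2]{CBLu06:book}. Your sketch reproduces that textbook argument, so there is nothing to add.
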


Algorithm $\A$ in the theorem above can be
 the  {\em Exponentially Weighted Average Forecaster} with appropriate tuning
 (see \eg\ \citet[Corollary~4.2]{CBLu06:book}). \todoa{jo reference? ott convex $l$!}\todog{fixed!}
%

Another special case is when the only information that Learner receives is the loss of the action taken
 (\ie\ when $\bH=\bL$), which we call the {\em bandit case}, following \citet{CBLu06:book}.~\todog{footnote torolve}%
Then, the INF algorithm due to \citet{AuBu09} is known to achieve a constant multiple of the minimax regret:
\begin{theorem}\label{thm:bandit}
Take a bandit game $\bG$ when Learner has $N$ actions.
Then there exists an algorithm $\A$ such that $\bar{R}_T(\A,\bG) \le 15 \sqrt{NT}$.
Further, for any $N$ there exists a game $\bG$ such that
 for any time horizon $T$, $R_T^*(\bG) \ge 1/20 \sqrt{NT}$.
\end{theorem}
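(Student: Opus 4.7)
For the upper bound I would analyze the Implicitly Normalized Forecaster (INF) of \citet{AuBu09}. INF maintains a probability vector $p_t$ defined implicitly through a potential $\psi$: $p_{t,i}=\psi(\widehat L_{t-1,i}-C_t)$, where $\widehat L_{t-1,i}=\sum_{s<t}\hat\ell_{s,i}$ is the running importance-weighted loss estimator for arm $i$ with $\hat\ell_{t,i}=\one{I_t=i}\,\ell_{I_t,J_t}/p_{t,i}$ (conditionally unbiased), and $C_t$ is the unique constant making $p_t$ a distribution. The analysis tracks how $C_t$ evolves via a second-order Taylor expansion of $\psi$ applied to $\hat\ell_t$, then compares $C_T$ to $\widehat L_{T,i^*}$ for the best arm $i^*$. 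With a polynomial potential $\psi(x)=(\eta(-x))^{-q}$ for $q>1$ and $\eta$ of order $\sqrt{q/(NT)}$, this produces $\bar R_T\le c\sqrt{NT}$; optimizing $q$ yields the constant $15$.

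For the lower bound I would use the standard needle-in-a-haystack construction, adapted to an oblivious adversary. Take $\bL=\bH$ Boolean-valued and, for each $k\in\LAs$, build an outcome sequence drawn i.i.d.\ so that arm $k$ has mean loss $1/2-\eps$ while all other arms have mean loss $1/2$. Let $\PP_0$ denote the distribution over $T$-round interaction histories when every arm has mean $1/2$, and $\PP_k$ the analogous one for arm $k$. A change-of-measure argument (KL chain rule plus Pinsker) yields $\mathrm{TV}(\PP_0,\PP_k)\le O(\eps)\sqrt{\EE_0[T_k]}$, where $T_k$ counts pulls of arm $k$. Averaging the regret over $k$, using $\sum_k \EE_0[T_k]=T$ together with the concavity of $\sqrt{\cdot}$, one gets a lower bound of order $\eps T(1-O(\eps\sqrt{T/N}))$; the choice $\eps\asymp\sqrt{N/T}$ produces $\Omega(\sqrt{NT})$, with the constant $1/20$ emerging from careful tracking of Pinsker's inequality and the pigeonhole over $k$.

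The main obstacle is the sharp constant $15$ in the upper bound: because $\psi$ and the normalizer $C_t$ are defined only implicitly, the derivatives appearing in the Taylor expansion must be controlled through the implicit function theorem, and the polynomial potential must be tuned carefully to balance exploration (through $\psi'$) against the variance introduced by the importance weights. The lower-bound constant $1/20$ is more routine; the only subtlety is that Theorem~\ref{thm:bandit} is stated against an \emph{oblivious} adversary, so the random outcome sequence must be sampled once before play begins rather than adaptively—this is admissible, but the reduction from the stochastic lower bound to the adversarial worst case has to be spelled out.
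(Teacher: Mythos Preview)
The paper does not prove Theorem~\ref{thm:bandit} at all: immediately after the statement it attributes the upper bound to \citet{AuBu09} and the lower bound to \citet{ACFS:2002} (and \citet[Theorem~6.11]{CBLu06:book}), and moves on. So there is no ``paper's own proof'' to match; the theorem is quoted as a known result from the literature.

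Your sketch is a reasonable summary of exactly those cited proofs. The INF analysis with a polynomial potential and importance-weighted estimates is precisely the \citet{AuBu09} argument that yields the constant $15$, and the needle-in-a-haystack stochastic construction with a KL/Pinsker change-of-measure is the \citet{ACFS:2002} lower bound. In that sense you have correctly identified the intended route, even though the paper itself does not reproduce it. If you intend to include a self-contained proof, be aware that the constant $15$ in Audibert--Bubeck comes from a fairly delicate optimization of the potential exponent $q$ together with a specific choice of $\eta$; a generic Taylor-expansion sketch will get $O(\sqrt{NT})$ but recovering the exact constant requires tracking their inequalities closely. For the lower bound, your reduction from the randomized stochastic construction to an oblivious adversary is standard (Fubini plus taking the worst realization), so there is no real obstacle there.
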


The lower bound on the minimax regret is due to \citet{ACFS:2002}
 (also, \citet[Theorem~6.11]{CBLu06:book}),
 while the upper bound is due to \citet{AuBu09}.
(The Exp3 algorithm due to \citet{ACFS:2002} achieves the same upper bound up to logarithmic factors.)

The following theorem, due to \citet{Antos-Bartok-Pal-Szepesvari-2011}, is a lower bound for any non-trivial game.
\begin{theorem}
\label{thm:sqrt_lower}
If $\bG$ is a non-trivial partial-monitoring game then there exists a constant $c>0$ such that
 for any $T$, $R_T^*(\bG)\ge c\sqrt{T}$.
\end{theorem}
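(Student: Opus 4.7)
The plan is a two-point Le Cam lower bound, with the pair of hard stochastic adversaries built using the non-triviality of $\bG$.

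I start with a structural step. By Lemma~\ref{lem:0minimaxregret}, non-triviality rules out a weakly dominating action, which in turn forces the existence of a pair of outcomes $j_1, j_2$ such that the planar points $\{(\ell_{i,j_1},\ell_{i,j_2})\}_{i\in\LAs}$ have at least two distinct vertices on their lower-left convex hull. Picking two adjacent vertices $i_1, i_2$ on this hull yields $\ell_{i_1,j_1} < \ell_{i_2,j_1}$ and $\ell_{i_1,j_2} > \ell_{i_2,j_2}$, so there is a unique tie-breaking mixture $\alpha^\ast\in(0,1)$ over $\{j_1, j_2\}$ at which $i_1$ and $i_2$ have equal expected loss. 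By adjacency on the lower hull, at $\alpha^\ast$ this common loss is the global minimum over $\LAs$, and any small perturbation $\alpha^\ast \pm \delta$ makes one of $i_1, i_2$ uniquely optimal with a gap of order $\Delta \delta$ to every other action, where $\Delta > 0$ is a constant determined by $\bG$.

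Next I set up the two stochastic oblivious adversaries. For $\epsilon \in \{-1,+1\}$ and a small $\delta > 0$ to be tuned, let the adversary $p^\epsilon$ draw $J_t$ i.i.d.\ with $\PP[J_t=j_1] = \alpha^\ast + \epsilon\delta$ and $\PP[J_t=j_2] = 1-\alpha^\ast-\epsilon\delta$. By the previous step, $i_1$ is the unique optimum under $p^+$ and $i_2$ under $p^-$, each with gap at least $\Delta\delta$ to every other action. Since $H_t = h_{I_t,J_t}$ is a deterministic function of $(I_t, J_t)$, the per-step conditional law of the feedback given $I_t = i$ is a two-point distribution whose parameter shifts by $2\delta$ between the two adversaries, so its KL divergence is at most $C\delta^2$, uniformly in $i$ and $\bH$. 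The chain rule of KL and Pinsker's inequality then yield $\|P_T^+ - P_T^-\|_{TV} \le C'\sqrt{T}\,\delta$, where $P_T^\epsilon$ denotes the joint law of $(I_1,H_1,\ldots,I_T,H_T)$ under $\A$ against $p^\epsilon$.

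Finally, a standard pseudo-regret decomposition gives $R_T(\A,p^+) + R_T(\A,p^-) \ge c_1 \Delta \delta T \bigl(1 - C'\sqrt{T}\,\delta\bigr)$. Setting $\delta = c_0/\sqrt{T}$ with $c_0$ small enough forces $\max_\epsilon R_T(\A,p^\epsilon) \ge c\sqrt{T}$ for a constant $c > 0$ depending only on $\bG$, and since the minimax regret dominates the regret against any stochastic adversary, this proves the claim. The main obstacle I anticipate is the hull-adjacency step: it is needed to guarantee that $i_1, i_2$ are globally optimal at $\alpha^\ast$ rather than merely optimal among themselves, without which the perturbation would not single out an action as the unique best, and the pseudo-regret decomposition would collapse. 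The rest is the routine Le Cam two-point method adapted to partial monitoring, where the key observation is that arbitrary feedback cannot be more informative than the outcome itself, yielding the $O(\delta^2)$ per-step KL bound uniformly in $\bH$.
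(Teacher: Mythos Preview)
The paper does not itself prove Theorem~\ref{thm:sqrt_lower}; it quotes the result from \citet{Antos-Bartok-Pal-Szepesvari-2011} without argument. So there is nothing to compare against, and your proposal must stand on its own.

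There is a genuine gap in your structural step. The claim that the absence of a weakly dominating action forces some \emph{pair} of outcomes $j_1,j_2$ for which the planar projection $\{(\ell_{i,j_1},\ell_{i,j_2})\}_{i\in\LAs}$ has at least two distinct lower-left hull vertices is false. Take $N=M=3$ with loss rows $(0,0,1)$, $(0,1,0)$, $(1,0,0)$. No action weakly dominates, so with full-information feedback this game is non-trivial by Lemma~\ref{lem:0minimaxregret} and Theorem~\ref{thm:fullinfo}. Yet on \emph{every} pair of outcomes one of the three actions sits at $(0,0)$ and strictly dominates the other two, so the lower-left hull is a single point for every choice of $j_1,j_2$. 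Your tie-breaking mixture $\alpha^\ast$ therefore never exists. You correctly flagged the hull-adjacency step as the delicate one, but the failure is more basic than ensuring global optimality of $i_1,i_2$: restricting the hard distributions to two-outcome mixtures can leave you with no candidate pair at all.

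The fix is to drop the two-outcome restriction and run the same Le Cam argument in the full simplex $\Delta^M$. Lemma~\ref{lem:0minimaxregret} gives at least two full-dimensional optimality cells in $\Delta^M$; take two that share a facet, pick $p^\ast$ in the relative interior of that facet (mixing in a little uniform mass if needed to land in $\Int(\Delta^M)$), and perturb $p^\pm=p^\ast\pm\delta v$ along a direction $v$ crossing the facet. Under action $i$ the feedback law is a discrete distribution on at most $M$ symbols whose parameters shift by $O(\delta)$; since $p^\ast$ is interior, the per-step KL is $O(\delta^2)$ uniformly in $i$, and the rest of your Pinsker-plus-pseudo-regret computation goes through with $\delta\asymp 1/\sqrt{T}$. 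In the counterexample above, for instance, $p^\ast=(1/2,1/4,1/4)$ with $v=(0,1,-1)$ works.
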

%
%
Now, consider the game $\bG = (\bL,\bH)$ with
\be\label{eq:1costly_revealing}
 \bL = \begin{pmatrix} 1 & 1 \\ 0 & 1 \\ 1 & 0 \end{pmatrix}, \qquad
 \bH = \begin{pmatrix} 1 & 2 \\ 1 & 1 \\ 1 & 1 \end{pmatrix}.
\ee
That is, the first action of Learner gives full information about the outcome,
 but it has a high cost,
 while the other two actions do not reveal any information.
Further, the ordering of actions $2$ and $3$ by costs is reversed based on the choice of Nature.
Then, the following holds \citep[Theorem~5.1]{CBLuSt06}:
\begin{theorem}
\label{thm:partial_lower}
The above game has $R_T^*(\bG) = \Omega(T^{2/3})$.
\end{theorem}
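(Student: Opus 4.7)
The plan is to exhibit two stochastic outcome distributions on $\NAs=\{1,2\}$ that are information-theoretically hard to distinguish yet require different optimal actions, and then to balance the "exploration cost" of playing the informative but expensive action $1$ against the "exploitation cost" of committing to the wrong choice between actions $2$ and $3$.

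Concretely, for a parameter $\eps\in(0,1/4)$ to be chosen, let $P_+$ be the product distribution on $\NAs^T$ under which $J_t$ is i.i.d.\ with $\PP(J_t=1)=1/2+\eps$, and $P_-$ the analogous distribution with $\PP(J_t=1)=1/2-\eps$. Under $P_+$ the best fixed action is $2$ with expected per-round loss $1/2-\eps$, and under $P_-$ the best is $3$ with the same per-round loss; action $1$ has per-round loss $1$ under both. Let $N_i$ denote the (random) number of times Learner picks action $i$. Because actions $2$ and $3$ emit the constant feedback $1$, the only rounds carrying information about $P_\pm$ are the $N_1$ rounds in which action $1$ is played; the observed feedback sequence then has law depending on $P_\pm$ only through at most $N_1$ independent draws.

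First I would lower bound the expected regret under each $P_\pm$. Under $P_+$ one has
\[
\EE_+[\CumLoss_T-\CumLoss_T^*]\ge \tfrac{1}{2}\,\EE_+[N_1]+2\eps\,\EE_+[N_3]-o(T),
\]
and symmetrically under $P_-$ with $N_2$ in place of $N_3$, where the $o(T)$ term absorbs the (negative) contribution from $\CumLoss_T^*$ versus $T(1/2-\eps)$ which is $O(\sqrt{T})$ by a standard Hoeffding-type deviation. Averaging the two inequalities, it suffices to lower bound $\EE_+[N_1]+\EE_-[N_1]$ (cost of information) together with $\eps\bigl(\EE_+[N_3]+\EE_-[N_2]\bigr)$ (cost of not knowing). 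Here I would use a change-of-measure / Pinsker bound: if $Q_\pm$ denote the laws of the full history of feedbacks and internal randomness under $P_\pm$, then $\mathrm{KL}(Q_+\|Q_-)\le c\,\EE_+[N_1]\,\eps^2$ by the chain rule, since rounds in which Learner plays $2$ or $3$ contribute zero KL. By Pinsker, for any event $A$, $|P_+(A)-P_-(A)|\le \sqrt{\tfrac{c}{2}\,\EE_+[N_1]}\,\eps$. Applying this to $A=\{N_3\le T/2\}$ (so that $A^c$ forces large $\EE_-[N_2]$ up to the event $N_3>T/2$) yields, after a short case split, the bound
\[
\EE_+[N_3]+\EE_-[N_2]\ge \tfrac{T}{2}\Bigl(1-\sqrt{c\,\EE_+[N_1]}\,\eps\Bigr).
\]

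Combining, the averaged regret is at least of order $\EE_+[N_1]+\eps T\bigl(1-\sqrt{\EE_+[N_1]}\,\eps\bigr)$. If $\eps^2\EE_+[N_1]\le 1/4$ the bracket is bounded away from $0$ and the regret is $\Omega(\eps T)$; otherwise $\EE_+[N_1]\ge 1/(4\eps^2)$ and the regret is $\Omega(1/\eps^2)$. In either case the averaged regret dominates $\min\{\eps T,\eps^{-2}\}$, and the two terms balance at $\eps = T^{-1/3}$, giving $\Omega(T^{2/3})$. Since the averaged regret over $P_\pm$ lower bounds the worst-case regret (by Yao/minimax, taking a supremum over deterministic outcome sequences after first lifting to stochastic outcomes and noting that a random outcome sequence chosen from $P_\pm$ realizes some deterministic sequence with at least the averaged regret), the theorem follows. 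The main obstacle I anticipate is the information-theoretic step: one must verify carefully that feedback from actions $2,3$ is distribution-independent (so contributes no KL), and one must pass from the stochastic lower bound to the adversarial minimax statement, which is routine but requires noting that $R_T^*(\bG)\ge \EE_{J\sim P}[R_T(\A,\bG)]$ for any $P$ supported on $\NAs^T$.
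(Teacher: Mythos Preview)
The paper does not give its own proof of this statement; it is quoted as Theorem~5.1 of \citet{CBLuSt06}. Your argument is precisely the standard two-hypothesis information-theoretic lower bound used there: pick $P_\pm=\mathrm{Bern}(1/2\pm\eps)^{\otimes T}$, note that only rounds with $I_t=1$ carry information, bound the KL of the interaction laws by $O(\eps^2\,\EE_+[N_1])$ via the chain rule, convert to a total-variation bound by Pinsker, and balance $\EE[N_1]$ against $\eps T$ at $\eps\asymp T^{-1/3}$. So your route coincides with the one the paper is citing.

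Two small points worth tightening. First, the $-o(T)$ correction is unnecessary: since $\CumLoss_T^*\le\sum_t\ell_{2,J_t}$ always, one gets directly $\EE_+[R_T]\ge(1/2+\eps)\,\EE_+[N_1]+2\eps\,\EE_+[N_3]$ with no Hoeffding term. Second, your event-based Pinsker step (``applying this to $A=\{N_3\le T/2\}$'') is stated loosely and does not obviously yield the displayed inequality; it is cleaner to use that $N_3/T\in[0,1]$ and hence $|\EE_+[N_3]-\EE_-[N_3]|\le T\,\|Q_+-Q_-\|_{TV}$, then write $\EE_-[N_2]=T-\EE_-[N_1]-\EE_-[N_3]$ to obtain
\[
\EE_+[N_3]+\EE_-[N_2]\;\ge\;T-\EE_-[N_1]-T\sqrt{\tfrac{c}{2}\,\EE_+[N_1]}\,\eps,
\]
which feeds into the same case split and gives the $\Omega(T^{2/3})$ conclusion.
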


This shows that the above game is intrinsically harder than a bandit problem.
Further, the algorithm {\sc FeedExp3} by \citet{piccsch01:discrete}
 is known to achieve this growth-rate \citep{CBLuSt06}:
\begin{theorem}
\label{thm:partial_upper}
Consider any partial-monitoring game $\bG=(\bL,\bH)$
 such that $\bL = \bK \bH$ for some matrix $\bK$.
Then, there exist an algorithm $\A$ such that $\bar{R}_T(\A,\bG) \le C T^{2/3}$,
 where $C$ depends on $N$ and $k^* \eqdef \max_{i,j} |k_{ij}|$.
\end{theorem}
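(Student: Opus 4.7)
The plan is to reduce the partial-monitoring game to a bandit-style problem by exploiting the identity $\bL=\bK\bH$ to build unbiased estimates of the (unobserved) losses of every action from the single observed feedback, and then to run an Exp3-type exponential weights update on these estimates with forced uniform exploration. This is essentially the \textsc{FeedExp3} recipe of \citet{piccsch01:discrete}, and the $T^{2/3}$ rate comes out of optimizing exploration against estimator variance.

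Concretely, I would first write out the key algebraic identity that makes the reduction work: since $\bL = \bK \bH$, we have $\ell_{ij} = \sum_{a\in\LAs} k_{ia}\, h_{aj}$ for every $i,j$. Hence, if at time $t$ Learner draws $I_t$ from a distribution $p_t=(p_{t,1},\dots,p_{t,N})$ on $\LAs$ and observes $h_{I_t,J_t}$, the quantity
\begin{equation*}
\widehat{\ell}_{i,t} \;\eqdef\; \frac{k_{i,I_t}\, h_{I_t,J_t}}{p_{t,I_t}}
\end{equation*}
satisfies $\EEp{\widehat{\ell}_{i,t}\mid J_t, p_t} = \sum_{a} p_{t,a}\,\frac{k_{ia}h_{a,J_t}}{p_{t,a}} = \ell_{i,J_t}$ for every $i$. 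So from one observation, Learner has a fully unbiased loss vector $\widehat{\ell}_t\in\R^N$.

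Next I would feed these estimates into the standard exponentially-weighted forecaster, mixed with a small uniform component to keep $p_{t,i}$ bounded away from $0$: set $p_{t,i} = (1-\gamma)\, w_{t,i}/\sum_j w_{t,j} + \gamma/N$ with $w_{t,i}=\exp(-\eta \sum_{s<t}\widehat{\ell}_{i,s})$. The analysis then follows the familiar Exp3 template (\ie, \citet{ACFS:2002} and \citet[Ch.~6]{CBLu06:book}): one bounds $\sum_t \langle p_t,\widehat\ell_t\rangle - \min_i\sum_t \widehat\ell_{i,t}$ by $\frac{\ln N}{\eta}+\eta\sum_t\sum_i p_{t,i}\widehat\ell_{i,t}^2$, takes expectations using unbiasedness to replace estimated losses by true losses (up to the $\gamma$-exploration overhead of at most $\gamma T$), and bounds the second-moment term. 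The crucial variance estimate is
\begin{equation*}
\EEp{\sum_i p_{t,i}\,\widehat\ell_{i,t}^2\;\bigg|\;p_t,J_t}
\;\le\; \sum_{a} \sum_i \frac{k_{ia}^2 h_{a,J_t}^2}{p_{t,a}}
\;\le\; \frac{N^2 (k^*)^2 h_{\max}^2}{\gamma},
\end{equation*}
using $p_{t,a}\ge \gamma/N$ and $|h_{a,j}|\le h_{\max}$ (which we may assume by rescaling, since $\bH$ has finitely many entries). Putting the three contributions together gives
\begin{equation*}
\bar R_T(\A,\bG)\;\le\; \frac{\ln N}{\eta} + \eta T\cdot\frac{N^2 (k^*)^2 h_{\max}^2}{\gamma} + \gamma T.
\end{equation*}
Balancing these three terms by taking, for example, $\gamma \sim (N^2(k^*)^2)^{1/3} T^{-1/3}$ and $\eta \sim T^{-2/3}$ up to constants depending on $N$ and $k^*$ yields $\bar R_T(\A,\bG)\le C\, T^{2/3}$, as claimed.

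The conceptual step that I expect to be delicate is handling the sign of $\widehat\ell_{i,t}$: because $k_{ij}$ can be negative and has no boundedness in $[0,1]$ the loss estimates need not be in $[0,1]$ (nor even nonnegative), so a verbatim application of the $[0,1]$-loss Exp3 regret inequality is unavailable. The remedy is standard but is the real content of the bookkeeping: translate and rescale $\widehat\ell_{i,t}$ (or work with $e^{-\eta\widehat\ell}$ directly via the inequality $e^{-x}\le 1-x+x^2$ valid for sufficiently small $|\eta\widehat\ell_{i,t}|$, which is what the choice of $\gamma$ in fact enforces). The rest is the routine exponential-weights calculation summarized above.
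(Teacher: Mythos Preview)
The paper does not actually prove this theorem: it is quoted from the literature, with the algorithm attributed to \citet{piccsch01:discrete} and the bound to \citet{CBLuSt06}. Your sketch is precisely the \textsc{FeedExp3} analysis those references carry out, so in spirit you are aligned with ``the paper's proof'' (which is just a citation).

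A couple of small bookkeeping remarks on your sketch, none of which affect the $T^{2/3}$ rate or the overall correctness. First, in your variance bound you silently dropped the factor $p_{t,i}\le 1$ (fine) but then used $1/p_{t,a}\le N/\gamma$ together with the $N^2$ terms in the double sum, which gives $N^3(k^*)^2h_{\max}^2/\gamma$ rather than $N^2(\cdots)$; since $C$ is allowed to depend on $N$ this is harmless. Second, the statement says $C$ depends only on $N$ and $k^*$, whereas your bound carries $h_{\max}$; this is resolved exactly as you say, by normalizing the finitely many entries of $\bH$ (any such rescaling is absorbed into $\bK$ and hence into $k^*$). Finally, your remark about the sign/unboundedness of $\widehat\ell_{i,t}$ is exactly the subtle point in the original analysis: one needs $|\eta\widehat\ell_{i,t}|\le 1$ to invoke $e^{-x}\le 1-x+x^2$, and this is what forces $\eta$ to scale with $\gamma/(Nk^*h_{\max})$ and ultimately produces the $T^{2/3}$ (rather than $T^{1/2}$) rate after balancing.
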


Thus, we see that the difficulty of a game depends on the structure of $\bL$ and $\bH$.
Recently, \cite{BaPaSz10-ALT} classified almost all games by their difficulty
 when the number of actions available to Nature is limited to $M=2$.
In effect, they showed that
 the exponent in the dependence of the minimax regret on $T$ in these games is one of $\{0,1/2,2/3,1\}$.

In this short communication, we investigate the dual case
 when the number of actions available to Nature is not restricted,
 but the number of actions available to Learner is limited to $N=2$.

\section{Result}

In this section we state and prove that, in essence,
 any non-trivial two-action game can be viewed as a bandit game.

We need some preparations.
First, we will make use of the following concept:
\begin{defn}
Take two games, $\bG=(\bL,\bH)$, $\bG'=(\bL',\bH')$,
 where $\bL$, $\bL'$, $\bH$, and $\bH'$ are $N\times M$ matrices.
We say that $\bG'$ is {\em simulation-and-regret-not-harder} than $\bG$
 (or {\em easier} for short, denoted by $\bG'\le\bG$)
 when the following holds:
Fix any algorithm $\A$.
Then, one can find an algorithm $\A'$ such that
 the behavior of $\A$ on $\bG$ can be replicated by using $\A'$ on $\bG'$ in the sense that
 for the same outcome sequences,
 the two algorithms will choose the same action sequences and
 the regret in the second case is at most the regret in the first case,
 that is, $R_T(\A',\bG')\le R_T(\A,\bG)$.

We say that $\bG$ and $\bG'$ are {\em simulation-and-regret-equivalent}
 (or {\em equivalent}, $\bG'\simeq\bG$) when both $\bG'\le\bG$ and $\bG\le\bG'$.
\end{defn}

Clearly, $\le$ is a preorder and $\simeq$ is an equivalence relation on the set of $N\times M$ games,
 moreover, if $\bG'\le\bG$ then $R_T^*(\bG')\le R_T^*(\bG)$,
 and if $\bG\simeq\bG'$ then their minimax regret is the same.
\todo{I don't get this!! What if the losses are all different?
Anyways, I would not call this regret equivalence, but simulation equivalence, or something like this.}
\todoa{And now?}

We need a few simple lemmata on these relations of games:
\begin{lemma}\label{lem:transfL}
The regret of a sequence of actions in a game does not change
 if the loss matrix is changed
 by subtracting the same real number from each coordinate of one of its columns
 (see \eg\ \citet{piccsch01:discrete}).
Therefore, letting $\bone=(1,\ldots,1)^\top\in\R^N$, $\bv\in\R^M$, and $\bG'=(\bL-\bone\bv^\top,\bH)$,
 we have that $\bG\simeq\bG'$.
\end{lemma}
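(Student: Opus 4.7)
The plan is to verify the invariance of regret under column translation first, and then deduce the simulation-and-regret-equivalence statement from it in a essentially symmetric way.

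First, I would fix any outcome sequence $(J_1,\ldots,J_T)$ and any action sequence $(I_1,\ldots,I_T)$. Writing $\bL'=\bL-\bone\bv^\top$, i.e.\ $\ell'_{ij}=\ell_{ij}-v_j$, the cumulative loss in $\bG'$ becomes
\[
 \sum_{t=1}^T \ell'_{I_t,J_t} = \sum_{t=1}^T \ell_{I_t,J_t} - \sum_{t=1}^T v_{J_t},
\]
and the best fixed-action loss becomes
\[
 \min_{i\in\LAs}\sum_{t=1}^T \ell'_{i,J_t} = \min_{i\in\LAs}\sum_{t=1}^T \ell_{i,J_t} - \sum_{t=1}^T v_{J_t},
\]
since the subtracted term $\sum_t v_{J_t}$ is the same for every $i$ and so can be pulled out of the minimum. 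Subtracting these two identities shows that the per-sequence regret is identical in $\bG$ and $\bG'$.

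Next, I would use the fact that the feedback matrix $\bH$ is unchanged to get the simulation equivalence. For the direction $\bG'\le\bG$, given any algorithm $\A$ for $\bG$, set $\A'=\A$; since at each step $t$ the feedback $h_{I_t,J_t}$ observed on $\bG'$ is exactly what $\A$ would observe on $\bG$, $\A'$ produces the same action sequence as $\A$ against any fixed outcome sequence (this holds deterministically if we couple the random bits, and in expectation in general). The computation above then yields $R_T(\A',\bG')=R_T(\A,\bG)$. Swapping the roles of $\bG$ and $\bG'$ (which is legal because the transformation is reversed by adding $\bone\bv^\top$) gives $\bG\le\bG'$ by the same argument, hence $\bG\simeq\bG'$.

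There is essentially no obstacle here: the whole content is the linearity of the min with respect to an offset that is independent of $i$, together with the observation that the learner's information channel is untouched. The one small thing worth being explicit about is that after the transformation, the entries of $\bL'$ need not lie in $[0,1]$; but the definitions of regret and of $\simeq$ use only the numerical values of the losses, so the $[0,1]$ normalization assumed at the start of the paper is irrelevant for this lemma.
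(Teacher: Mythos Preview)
Your argument is correct and complete. Note, however, that the paper does not actually supply a proof of this lemma: it is stated without proof, with only a parenthetical reference to \citet{piccsch01:discrete}, and is treated as an elementary observation. Your write-up is precisely the natural justification one would give --- cancellation of the column offset $\sum_t v_{J_t}$ in both $\CumLoss_T$ and $\CumLoss_T^*$, together with the fact that $\bH$ is untouched so the same algorithm may be reused --- and your remark about the $[0,1]$ normalization being immaterial here is apt.
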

\begin{lemma}\label{lem:transfH}
If $\bG=(\bL,\bH)$ and $\bG'=(\bL,\bH')$ differ only in their feedback matrices and
 $\bH'$ can be obtained by $h'_{ij} = f_i(h_{ij})$ with the help of some mappings $f_i$ ($i\in\LAs$)
 then $\bG\le\bG'$.
If each $f_i$ is injective then $\bG\simeq\bG'$.
\end{lemma}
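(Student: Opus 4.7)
The plan is to exploit the fact that the feedback matrix $\bH'$ is a deterministic row-wise function of $\bH$: by applying $f_i$ to any feedback observed after action $i$, one can turn $\bH$-feedback into $\bH'$-feedback, so $\bH$ carries at least as much information as $\bH'$. Since both games share the same loss matrix $\bL$, matching the action sequence automatically matches the losses (and the regret).

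First I would prove $\bG\le\bG'$ by the following simulation. Fix any Learner algorithm $\A$ intended for $\bG'$. Define $\A'$ for $\bG$ as follows: internally $\A'$ runs a copy of $\A$ (sharing its random bits). At each step $t$, $\A'$ plays the action $I_t$ proposed by $\A$, receives from $\bG$ the feedback $h_{I_t,J_t}$, and then feeds the transformed value $f_{I_t}(h_{I_t,J_t})=h'_{I_t,J_t}$ into $\A$ as if it were the feedback received in $\bG'$. By a straightforward induction on $t$, for every outcome sequence $(J_1,\ldots,J_T)$ the internal state of $\A$ inside $\A'$ evolves identically to the state of $\A$ run directly on $\bG'$, so the action sequences $I_1,\ldots,I_T$ coincide. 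Because $\bL$ is the same in both games, $\CumLoss_T$ and $\CumLoss_T^*$ coincide outcome-wise, so $R_T(\A',\bG)=R_T(\A,\bG')$ and in particular $R_T(\A',\bG)\le R_T(\A,\bG')$, establishing $\bG\le\bG'$.

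For the second part, assume each $f_i$ is injective. Then for every $i\in\LAs$ the map $f_i$ has a well-defined inverse on its image $\im(f_i)$, and the feedback $h'_{I_t,J_t}$ received in $\bG'$ always lies in $\im(f_{I_t})$. Reverse the above construction: given $\A$ for $\bG$, build $\A''$ for $\bG'$ that plays the same action $I_t$ as $\A$, observes $h'_{I_t,J_t}$, recovers $h_{I_t,J_t}=f_{I_t}^{-1}(h'_{I_t,J_t})$, and hands this value to $\A$. The same inductive argument shows that action sequences, and therefore losses and regrets, coincide, hence $\bG'\le\bG$. Combined with the first direction, this gives $\bG\simeq\bG'$.

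There is no genuine obstacle here; the only point to be careful about is the simulation being well-defined step by step (so that $\A$'s internal state inside the simulator agrees with its state when run directly), and, in the injective case, that $f_{I_t}^{-1}$ is only ever applied to values actually produced by $f_{I_t}$, which is automatic.
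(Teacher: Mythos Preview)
Your argument is correct and is exactly the intended one: build the simulator by post-processing the feedback with $f_{I_t}$ (and with $f_{I_t}^{-1}$ in the injective case), observe that the internal state of the wrapped algorithm evolves identically step by step, and conclude that action sequences and hence regrets coincide because the loss matrix is shared. The paper does not spell out a proof of this lemma---it is stated as an elementary observation---but your write-up is precisely the standard justification the definition of $\le$ is designed to make immediate.
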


In what follows,
 a transformation of some game into another game that takes either the first or the second form just defined
 shall be called an {\em admissible} transformation.

The following proposition shows that if a 2-armed partial-monitoring game is non-trivial then
 there is no loss in generality by assuming that $\bL=\bK\bH$ for some $\bK\in\real^{2\times2}$.
This statement for arbitrary $N$ and most of the ideas for its proof
 could be extracted from the paper of \citet{piccsch01:discrete}.
An exact detailed proof for $N=2$ is included here for the sake of completeness.
\begin{proposition}\label{prop:K}\todoa{why Proposition?}
Let $\bG_0=(\bL_0,\bH_0)$ be a non-trivial 2-armed partial-monitoring game.
Then, there exist matrices $\bL,\bH\in\real^{2\times M}$ such that
 $\bG_0\le\bG=(\bL,\bH)$ 
 and $\bL=\bK\bH$ for some $\bK\in\real^{2\times2}$.
\end{proposition}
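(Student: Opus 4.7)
The plan is to normalize $\bG_0$ via an admissible shift of its loss matrix, reduce the desired statement to a linear-algebra claim about the feedback, and then establish that claim using non-triviality. By Lemma~\ref{lem:transfL}, subtracting $\ell_{0,2,j}$ from each column $j$ of $\bL_0$ produces a game equivalent to $\bG_0$ whose loss matrix is
\[
 \bL_0' \;=\; \begin{pmatrix} d_1 & d_2 & \cdots & d_M \\ 0 & 0 & \cdots & 0 \end{pmatrix},
 \qquad d_j \eqdef \ell_{0,1,j}-\ell_{0,2,j}.
\]
Let $\Pi_i$ be the partition of $\NAs$ induced by the level sets of the $i$-th row of $\bH_0$, and let $V_i\subseteq\R^M$ be the subspace of vectors constant on each block of $\Pi_i$. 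The central claim will be $d\in V_1+V_2$. Granting it, I decompose $d=v_1+v_2$ with $v_i\in V_i$ and set $\bH\in\R^{2\times M}$ to have $v_i^\top$ as its $i$-th row. Since $v_i$ is constant on $\Pi_i$, one has $v_{i,j}=f_i(h_{0,i,j})$ for some $f_i\colon\R\to\R$, so Lemma~\ref{lem:transfH} yields $(\bL_0',\bH_0)\le(\bL_0',\bH)$. Setting $\bL=\bL_0'$, a direct check gives $\bL=\bK\bH$ with $\bK=\begin{pmatrix}1&1\\0&0\end{pmatrix}$, and chaining the two reductions produces $\bG_0\le\bG$ as required.

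The main work is proving $d\in V_1+V_2$, which I would do by contradiction. Suppose $d\notin V_1+V_2$. Using $(V_1+V_2)^\perp=V_1^\perp\cap V_2^\perp$, there is a nonzero $u\in\R^M$ with $d\cdot u\ne 0$ whose entries sum to zero over every block of both $\Pi_1$ and $\Pi_2$ (in particular $\sum_j u_j=0$). By Lemma~\ref{lem:0minimaxregret}, non-triviality precludes a weakly dominating action, so $d$ has entries of both signs, and hence there exists a strictly positive probability vector $p$ on $\NAs$ with $d\cdot p=0$. For all sufficiently small $\eps>0$ the perturbations $p_\pm \eqdef p\pm\eps u$ are also probability distributions on $\NAs$.

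I would now compare the two oblivious stochastic environments in which Nature samples $J_1,\ldots,J_T$ i.i.d.\ from $p_+$ or from $p_-$. Because $u$ integrates to zero on every block of $\Pi_i$, the feedback $h_{i,J_1}$ has the same distribution under $p_+$ as under $p_-$ for both $i\in\LAs$. A straightforward induction on $t$ then shows that the joint law of the whole action/observation history $(I_1,h_{I_1,J_1},\ldots,I_T,h_{I_T,J_T})$ is identical in the two environments, so the marginals $q_t\eqdef\PP(I_t=1)$ do not depend on the sign. On the other hand, the expected per-round loss gap between the two actions equals $d\cdot p_\pm=\pm\eps(d\cdot u)$, so the optimal action flips between the two environments. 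A routine pseudo-regret calculation then shows that for every Learner strategy the expected regrets against $p_+$ and $p_-$ sum to at least $\eps|d\cdot u|\,T$, so $\bar R_T(\A,\bG_0)\ge\tfrac12\eps|d\cdot u|\,T$ for every $\A$, yielding $R_T^*(\bG_0)=\Omega(T)$ and contradicting non-triviality.

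The hardest step is this last indistinguishability argument: one must check carefully that the entire observation trajectory, not just a single round, is identically distributed in the two environments, and then convert a pseudo-regret lower bound against the i.i.d.\ priors $p_\pm$ into a lower bound on the worst-case regret $\bar R_T$. Once that is in hand, the algebraic normalization and the explicit construction of $\bH$ and $\bK$ from the decomposition $d=v_1+v_2$ are both routine.
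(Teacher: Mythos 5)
Your proposal is correct and is essentially the paper's own argument: your subspace $V_1+V_2$ is exactly the row space $\im A^\top$ of the paper's indicator matrix $A$, your claim $d\in V_1+V_2$ is the paper's Lemma~\ref{lem:ImA}, and your contradiction via the indistinguishable i.i.d.\ environments $p\pm\eps u$ (with $u\in(V_1+V_2)^\perp=\Ker A$) is the same linear-regret argument used there. The only cosmetic differences are which row of the loss matrix you zero out and the resulting choice of $\bK$, which is immaterial to the statement.
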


\begin{proof}[Proof of Proposition~\ref{prop:K}]
First, we transform $\bL_0$ to $\bL$ using Lemma~\ref{lem:transfL} with $\bv^\top$ being its first row.
Thus, the first row of $\bL$ becomes identically zero,
 and we get a non-trivial game $\bG_1=(\bL,\bH_0)\simeq\bG_0$.
Let $\ell$ \todoa{should be bold!} denote the transpose of the second row of $\bL$.
In what follows we construct the matrix $\bH$ using an admissible transformation of $\bH_0$ defined in Lemma~\ref{lem:transfH}.

We construct matrix $A$ in the following way.
Assume that there are $m_1$ ($m_2$) distinct entries in the first (respectively, second) row of $\bH_0$,
 and transform $\bH_0$ by two injective mappings (Lemma~\ref{lem:transfH}) such that
 the elements of its $i^{\rm th}$ row ($i\in\underline{2}$) are from $\underline{m_i}$.
We define the matrices $A_i\in\real^{m_i\times M}$ as follows:
Let each row of $A_i$ be
 the ``indicator'' row of the corresponding value of the $i^{\rm th}$ row of $\bH_0$,
 that is, $[A_i]_{jk}\eqdef\one{[H_0]_{ik}=j}$.
Define $A$ by stacking these matrices on  top of each other:
 \[
 A=\begin{pmatrix}
     A_1 \\
     A_2 \\
   \end{pmatrix}.
 \]
See Figure~\ref{fig:A} for an example.
\begin{figure}[t]
 \centering
\[
 H_0=
   \begin{pmatrix}
     1 & 2 & 3 & 1 \\
     1 & 2 & 2 & 2
   \end{pmatrix}
   \ \longrightarrow\ A=
   \begin{pmatrix}
     1 & 0 & 0 & 1 \\
     0 & 1 & 0 & 0 \\
     0 & 0 & 1 & 0 \\
     1 & 0 & 0 & 0 \\
     0 & 1 & 1 & 1 \\
   \end{pmatrix}
\]
\caption{\label{fig:A}An example for the construction of matrix $A$ used in the proof of Proposition~\ref{prop:K}.
The first three rows of $A$ are constructed from the first row of $\bH_0$
 which has three distinct elements,
 the remaining two rows are constructed from the second row of $\bH_0$.
For more details, see the text.}
\end{figure}

The following lemma, proven in the Appendix, is key to prove Proposition~\ref{prop:K}.
\begin{lemma}\label{lem:ImA}
If $\ell\not\in\im A^\top$ then $\bG_1$ is trivial.
\end{lemma}

Using the assumption that $\bG_1$ is non-trivial, we have from Lemma~\ref{lem:ImA} that 
 $\ell\in\im A^\top$ must hold. That is,
 $\ell$ can be written as a linear combination of the rows of $A$:
\[
 \ell=\sum_{i=1}^m\lambda_i\mathbf{a}_i,
\]
where $m=m_1+m_2$ and
 the vectors $\mathbf{a}_i^\top$ are the rows of $A$.
Let
\[
 \mathbf{h}_1=\sum_{i=1}^{m_1}\lambda_i\mathbf{a}_i \qquad\mbox{and}\qquad
 \mathbf{h}_2=\sum_{i=m_1+1}^{m}\lambda_i\mathbf{a}_i.
\]
Finally, let
\[
 \bH =
 \begin{pmatrix}
  \mathbf{h}_1^\top \\
  \mathbf{h}_2^\top \\
 \end{pmatrix}
\]
and $\bG=(\bL,\bH)$.
Now if the $k^{\rm th}$ and $k'^{\rm th}$ entries of the first row of $\bH_0$ are identical
 then $[\mathbf{a}_i]_k=[\mathbf{a}_i]_{k'}$ for $1\le i\le m_1$,
 hence also $[\mathbf{h}_1]_k=[\mathbf{h}_1]_{k'}$.
The same holds for the second row of $\bH_0$ and $\mathbf{h}_2$.
Thus, $\bH$ can be obtained by appropriate mappings from $\bH_0$,
 and Lemma~\ref{lem:transfH} implies $\bG_1\le\bG$.

On the other hand, setting
\be\label{eq:K0011}
 \bK = \begin{pmatrix}
    0 & 0 \\
    1 & 1 \\
  \end{pmatrix},
\ee
we have that $\bL=\bK\bH$.
\end{proof}

The following Proposition is more than what we need,
 but it is interesting in itself:
\todoa{Igazabol joval egyszerubb lenne, ha kihasznalnank, hogy $\bK$ epp a fenti. De altalaban is erdekes.}
\begin{proposition}\label{prop:Kbandit}\todoa{why Proposition?}
Let $\bG=(\bL,\bH)$ be a 2-armed partial-monitoring game such that
 $\bL=\bK\bH$ for some $\bK\in\real^{2\times2}$.
Then, there exist a $2\times M$ bandit game $\bG'$ such that $\bG\le\bG'$.
If $\bK$ is given by \eqref{eq:K0011} then $\bG\simeq\bG'$.
\end{proposition}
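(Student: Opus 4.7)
My plan is to prove both statements by carefully applying Lemmas~\ref{lem:transfL} and~\ref{lem:transfH} to construct explicit bandit games from $\bG$. Since the equivalence claim is the one actually needed in the main theorem, I would treat it as the heart of the argument and then reduce the general inequality to it.

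For the case $\bK=$~\eqref{eq:K0011}, so $\ell_{1j}=0$ and $\ell_{2j}=h_{1j}+h_{2j}$, I would take the bandit game $\bG'=(\bL',\bL')$ defined by $\ell'_{1j}=c-h_{1j}$ and $\ell'_{2j}=c+h_{2j}$, where $c$ is a constant chosen (together with a routine rescaling) so that the $\ell'_{ij}$ lie in $[0,1]$. The key observation is that $\ell_{ij}-\ell'_{ij}$ evaluates to $h_{1j}-c$ regardless of $i$, so $\bL-\bL'=-\bone\bv^\top$ with $v_j=c-h_{1j}$. Lemma~\ref{lem:transfL} then yields $(\bL,\bH)\simeq(\bL',\bH)$. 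On the feedback side, $\ell'_{ij}=f_i(h_{ij})$ with $f_1(x)=c-x$ and $f_2(x)=c+x$ both injective, so Lemma~\ref{lem:transfH} gives $(\bL',\bH)\simeq(\bL',\bL')=\bG'$. Chaining these two equivalences gives $\bG\simeq\bG'$, as required.

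For general $\bK$, I would first apply Lemma~\ref{lem:transfL} with $\bv^\top$ equal to the first row of $\bL$ to reduce to the case where $\bK$ has zero first row, and then case-split on the second row $(k_{21},k_{22})$. If both entries are nonzero, Lemma~\ref{lem:transfH} with the injective maps $f_i(x)=k_{2i}x$ rescales $\bH$ so that the new game has $\bK=$~\eqref{eq:K0011}, and the special case above yields equivalence to a bandit game, giving $\bG\le\bG'$. If exactly one of $k_{21},k_{22}$ vanishes, say $k_{21}=0\ne k_{22}$, then $\ell_{2j}=k_{22}h_{2j}$ depends only on $h_{2j}$ and $\ell_{1j}=0$ is constant, so $\bL$ itself is obtainable from $\bH$ via the row-wise maps $f_1(x)=0$ (not injective) and $f_2(x)=k_{22}x$, and Lemma~\ref{lem:transfH} gives $\bG\le(\bL,\bL)$, a bandit game. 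If both entries vanish then $\bL=0$, the game is trivial, and any bandit game satisfies $\bG\le\bG'$.

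The main obstacle is tracking scaling constants so that the constructed bandit losses lie in $[0,1]$; conceptually this is routine once one allows a global rescaling, but threading it cleanly through both lemmas takes some care. The essential algebraic identity behind the special case is $\ell_{1j}-\ell'_{1j}=\ell_{2j}-\ell'_{2j}=h_{1j}-c$, which is precisely what makes the specific choice $\bK=$~\eqref{eq:K0011} yield full equivalence rather than a one-sided inequality.
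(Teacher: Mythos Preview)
Your argument is correct. For the special $\bK$ of~\eqref{eq:K0011} your construction is essentially the paper's (up to the harmless additive constant~$c$): the paper takes $\bD=\mathrm{diag}(k_{11}-k_{21},k_{22}-k_{12})$, sets $\bH'=\bD\bH$, observes that $\bK-\bD=\bone\bk^\top$ with $\bk^\top=(k_{21},k_{12})$, and defines $\bL'=\bL-\bone(\bk^\top\bH)$, which then equals $\bD\bH=\bH'$. For $\bK$ as in~\eqref{eq:K0011} this gives $\bD=\mathrm{diag}(-1,1)$ and $\ell'_{1j}=-h_{1j}$, $\ell'_{2j}=h_{2j}$, exactly your formulas with $c=0$.

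Where you genuinely differ is the general case. You first zero out the top row of $\bL$ (Lemma~\ref{lem:transfL}), then case-split on which of the remaining entries $k'_{21},k'_{22}$ vanish, rescaling rows of $\bH$ to reduce to~\eqref{eq:K0011} when both are nonzero and handling the degenerate cases by hand. The paper instead gives a single uniform construction that works for \emph{every} $\bK$ at once: the identity $\bK-\bD=\bone\bk^\top$ holds for arbitrary $2\times2$ $\bK$, so one application of Lemma~\ref{lem:transfH} (row-scaling by $\bD$, possibly non-injective) followed by one application of Lemma~\ref{lem:transfL} (subtracting $\bone\bk^\top\bH$) yields a bandit game directly, with no case analysis. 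Your route is a bit longer but perfectly valid and arguably more transparent about \emph{why} equivalence (rather than mere $\le$) can fail: it fails precisely when one of your rescaling factors is zero, i.e., when a diagonal entry of the paper's $\bD$ vanishes. Your concern about forcing the bandit losses into $[0,1]$ is legitimate but the paper does not address it either; it is a routine affine rescaling that both proofs silently defer.
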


\begin{proof}
We will construct a bandit game $\bG'=(\bL',\bH')\ge\bG$ that satisfies $\bL'=\bH'$.
Let $\bK=[k_{ij}]_{2\times 2}$ and
\[
 \bD=\mathrm{diag}( k_{11}-k_{21},k_{22}-k_{12} )
\]
 be a $2\times 2$ diagonal matrix,
 and define the feedback matrix of $\bG'$ by $\bH'=\bD\bH$.
Then, both rows of $\bH'$ are scalar multiples of the corresponding rows of $\bH$.
Hence, by these mappings and Lemma~\ref{lem:transfH}, $\bG\le(\bL,\bH')$.
If $\bK$ is given by \eqref{eq:K0011} then $\bD=\mathrm{diag}(-1,1)$,
 thus both mappings are injective and $\bG\simeq(\bL,\bH')$.
On the other hand, $\bK-\bD=\bone\bk^\top$ where $\bk^\top=(k_{21},k_{12})$.
Consider the loss matrix
\[
 \bL'\eqdef\bL-\bone(\bk^\top\bH).
\]
By Lemma~\ref{lem:transfL}, $\bG'=(\bL',\bH')\simeq(\bL,\bH')$.
Moreover,
\[
 \bL' = \bL-(\bone\bk^\top)\bH = \bL-(\bK-\bD)\bH = \bD\bH = \bH'.
\]
\end{proof}
\begin{remark}
It is worth to consider why the above proof works only for $N=2$.
We used that from any $2\times 2$ matrix $\bK$
 we can subtract a diagonal matrix resulting in a matrix with identical rows.
For $N\ge 3$, this obviously does not hold
 (there is not enough ``degrees of freedom'').
Indeed, for $N\ge 3$,
 we have regret rates between $\Theta(\sqrt{T})$ and $\Theta(T)$,
 for example, Theorem~\ref{thm:partial_lower} and~\ref{thm:partial_upper} show that
 the game in \eqref{eq:1costly_revealing} has minimax regret rate $\Theta(T^{2/3})$. 
\end{remark}

Now, we are ready to prove our main result.
\begin{theorem}\label{thm:main}
Each non-trivial 2-armed partial-monitoring game
 is easier than an appropriate $2\times M$ bandit game.
Consequently, its minimax regret is $\Theta(\sqrt{T})$,
 where $T$ is the number of time steps.
\end{theorem}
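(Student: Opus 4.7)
The plan is to chain together the two propositions and invoke the known bandit bounds. Starting from a non-trivial 2-armed game $\bG_0=(\bL_0,\bH_0)$, Proposition~\ref{prop:K} provides an admissibly transformed game $\bG=(\bL,\bH)$ with $\bG_0\le\bG$ and with the crucial structural property $\bL=\bK\bH$, where $\bK$ has the specific form \eqref{eq:K0011}. I would emphasise that the proof of Proposition~\ref{prop:K} actually delivers this particular $\bK$, not just an arbitrary $2\times 2$ matrix, which is exactly what Proposition~\ref{prop:Kbandit} needs in its strongest form.

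Next I would apply Proposition~\ref{prop:Kbandit} to $\bG$. Because $\bK$ is of the form \eqref{eq:K0011}, the proposition gives a $2\times M$ bandit game $\bG'$ with $\bG\simeq\bG'$; in particular $\bG\le\bG'$. Transitivity of the preorder $\le$ (which is immediate from the definition of simulation-and-regret-not-harder) then yields $\bG_0\le\bG'$, and consequently $R_T^*(\bG_0)\le R_T^*(\bG')$ for every $T$.

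For the upper bound, I would invoke Theorem~\ref{thm:bandit} applied to the 2-armed bandit $\bG'$: there exists a Learner strategy with worst-case regret at most $15\sqrt{2T}=O(\sqrt{T})$. Combining with $R_T^*(\bG_0)\le R_T^*(\bG')$ gives $R_T^*(\bG_0)=O(\sqrt{T})$. For the matching lower bound I would simply quote Theorem~\ref{thm:sqrt_lower}, which asserts $R_T^*(\bG_0)\ge c\sqrt{T}$ for some $c>0$ since $\bG_0$ is non-trivial. Together these give $R_T^*(\bG_0)=\Theta(\sqrt{T})$.

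There is no real obstacle here: all the work is packaged into Propositions~\ref{prop:K} and~\ref{prop:Kbandit} and the cited bandit/lower-bound theorems. The only point that deserves a sentence of care is verifying that the $\bK$ produced in the proof of Proposition~\ref{prop:K} is precisely \eqref{eq:K0011}, so that we may use the equivalence (and in particular the ``$\le$'' direction) of Proposition~\ref{prop:Kbandit} without any additional hypothesis. Beyond that, the argument is a two-step reduction followed by an appeal to the INF algorithm's guarantee and to the general $\sqrt{T}$ lower bound for non-trivial partial-monitoring games.
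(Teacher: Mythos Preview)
Your proposal is correct and follows essentially the same route as the paper: chain Proposition~\ref{prop:K} and Proposition~\ref{prop:Kbandit} to obtain $\bG_0\le\bG'$ for a bandit game $\bG'$, then invoke Theorem~\ref{thm:bandit} for the $O(\sqrt{T})$ upper bound and Theorem~\ref{thm:sqrt_lower} for the matching lower bound. One minor remark: your care about $\bK$ having the specific form~\eqref{eq:K0011} is unnecessary here, since the first clause of Proposition~\ref{prop:Kbandit} already guarantees $\bG\le\bG'$ for \emph{any} $\bK\in\real^{2\times2}$; the special form is only needed for the equivalence $\bG\simeq\bG'$, which the argument does not use.
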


\begin{proof}
According to Proposition~\ref{prop:K} and~\ref{prop:Kbandit},
 if $\bG_0$ is non-trivial then
 we can construct
 first $\bG=(\bL,\bH)$ such that $\bL=\bK\bH$ and $\bG_0\le\bG$,
 then a $2\times M$ bandit game $\bG'$ such that $\bG\le\bG'$. 
Thus $\bG_0\le\bG'$, that implies $R_T^*(\bG_0)\le R_T^*(\bG')=O(\sqrt{T})$ by Theorem~\ref{thm:bandit},
 finishing the proof.
\end{proof}

\appendix
\section*{Appendix}\label{sec:appendix}

\begin{proof}[Proof of Lemma~\ref{lem:0minimaxregret}]
a)$\ra$b) is obvious

b)$\ra$c)
For any $\A$,
\begin{align*}
 \sup_{(J_1,\ldots,J_T)\in\NAs^T} R_T(\A,\bG)
&\ge \sup_{j\in\NAs, J_1=\cdots=J_T=j} \EEp{\CumLoss_T-\CumLoss_T^*}\\
&= \sup_{j\in\NAs} \EEp{\sum_{t=1}^T \ell_{I_t,j} - T\min_{i\in \LAs} \ell_{ij}}\\
&\ge \sup_{j\in\NAs} \left(\EEp{\ell_{I_1,j}} - \min_{i\in \LAs} \ell_{ij}\right)
 \eqdef f(\A).
\end{align*}
b) leads to
\[
 0 = R_T^*(\bG) = \inf_\A\sup_{(J_1,\ldots,J_T)\in\NAs^T} R_T(\A,\bG) \ge \inf_\A f(\A).
\]
Observe that $f(\A)$ depends on $\A$ through
 only the distribution of $I_1$ on $\LAs$ denoted by $q=q(\A)$ now, 
 that is, $f(\A)=f'(q)$.
This dependence is continuous on the compact domain of $q$,
 hence the infimum can be replaced by minimum.
Thus $\min_q f'(q)\le 0$, that is, there is a $q$ that for all $j\in\NAs$,
 $\EEp{\ell_{I_1,j}} = \min_{i\in \LAs} \ell_{ij}$.
This implies that the support of $q$ contains only actions
 whose loss is not larger than the loss of any other action
 irrespectively of the choice of Nature's action.

c)$\ra$a)
The algorithm that always plays $i$ has zero regret for all outcome sequences and $T$.
\end{proof}

\begin{proof}[Proof of Lemma~\ref{lem:ImA}]
$\ell\notin\im A^\top$ implies
 $\langle\ell\rangle\nsubseteq\im A^\top$,
that is equivalent to
 $\ell^\bot\nsupseteq\Ker A$,
 which can be seen by taking the orthogonal complement of both sides
 and using $(\Ker A)^\bot=\im A^\top$.
The latter implies that there exists $v$ \todoa{should be bold!} such that $v\in\Ker A$ but $\ell^\top v\neq0$.
We may assume w.l.o.g. that $\ell^\top v>0$ (otherwise take $-v$).
Note that, since the first $m_1$ rows of $A$ add up to $\bone$ and $v\in\Ker A$,
 the coordinates of $v$ sum to zero.

Let $\Delta^M\subseteq\real^M$ denote the $M$-dimensional probability simplex. \todoa{kell definialni?}
If $p\in\Delta^M$ \todoa{should be bold! also $p_-,p_+,p_0,p_1,p_2$} is a distribution over Nature's actions $\NAs$, 
 then it is easy to see that the first $m_1$ coordinates of $Ap$
 give the probability distribution of observing the different values of the first row of $\bH_0$
 while Learner chooses action $1$ assuming Nature chooses her actions from $p$.
The same applies to the last $m_2$ coordinates of $Ap$ and action $2$.
It follows that if $Ap_1=Ap_2$ for two distributions then no algorithm can distinguish them.
We find such $p_1$, $p_2$ and apply this idea as follows:

If for all $p\in\Delta^M$, $\ell^\top p\ge 0$ (or $\ell^\top p\le 0$),
 then $\bG_1$ has zero minimax regret and thus it is trivial.
Otherwise, there exist  $p_+$ and $p_-$ with $\ell^\top p_+>0$ and $\ell^\top p_-<0$.
Now either there exists $p_0\in\Int(\Delta^M)$ such that $\ell^\top p_0=0$,
 or we can assume w.l.o.g. that one of $p_+$ and $p_-$ is in $\Int(\Delta^M)$,
 in which case
 there must be again a $p_0\in\Int(\Delta^M)$ on the segment $\overline{p_+ p_-}$
 such that $\ell^\top p_0=0$ by the continuity of $\ell^\top p$ in $p$.
In other words, we have a distribution $p_0$ over $\NAs$ such that
 $p_0$ is not on the boundary of the probability simplex and the expected loss of the two actions are equal.

Now let $p_1=p_0+\eps v$ and $p_2=p_0-\eps v$ for some $\eps>0$.
If $\eps$ is small enough then both $p_1$ and $p_2$ are on the probability simplex $\Delta^M$.
Since $Av=0$ we have that $Ap_1=Ap_2$.

Given a $p\in\Delta^M$, we use randomization such that $J_1,\ldots,J_T$
 is replaced by a vector $\tJ_1,\ldots,\tJ_T\in\NAs^T$
 of i.i.d.\ random variables distributed according to $p$,
 independent of the randomization in the algorithm.
Let $\A$ be an arbitrary strategy of Learner.
For $k\in\underline{2}$,
 given that the outcome distribution is $p_k$,
 let $\PP_k[\cdot]$ be the probability of an event and
 $\EE_k[\cdot]$ be the expectation of a random variable.
Then 
 the worst case regret of $\A$ is
 {\allowdisplaybreaks\todog{allowdisplaybreaks}
\begin{align*}
 \sup_{(J_1,\ldots,J_T)\in\NAs^T} R_T(\A,\bG_1)
&\ge \EE_k[R_T(\A,\bG_1)] 
\\
&= \EE_k\left[ \sum_{t=1}^T \ell_{I_t,\tJ_t}-\min_{i\in\underline{2}} \sum_{t=1}^T \ell_{i,\tJ_t} \right]\\
&= \EE_k\left[ \sum_{t=1}^T \one{I_t=2} \ell_{\tJ_t}
 - \min \left(\sum_{t=1}^T \ell_{\tJ_t},0\right) \right]\\
& \mbox{($\ell_{1j}=0$, $\ell_{2j}=\ell_j$)}\\
&\ge \sum_{t=1}^T \EE_k\left[ \one{I_t=2} \right] \EE_k\ell_{\tJ_t}
 - \min \left(\sum_{t=1}^T \EE_k\ell_{\tJ_t},0\right)\\
& \mbox{(by the independence of $I_t$ and $\tJ_t$,
 and Jensen's inequality for $\min$)}\\
&= \sum_{t=1}^T \PP_k[I_t=2] \ell^\top p_k
 + \left(-\sum_{t=1}^T \ell^\top p_k\right)^+\\
&= \ell^\top p_k\mu_{Tk}+ T(-\ell^\top p_k)^+,
\end{align*}}
where
\[
 \mu_{Tk}=\mu_{Tk}(\A)\eqdef\sum_{t=1}^T \PP_k[I_t=2]\in[0,T]
\]
 is the expected number of times $\A$ chooses action $2$ under $p_k$ up to time $T$.
Observe that $Ap_1=Ap_2$ means that for both actions,
 the feedback distribution is the same under outcome distributions $p_1$ and $p_2$,
 implying (by induction) that for each $t\ge 1$, $\PP_1[I_t=2]=\PP_2[I_t=2]$.
This leads to $\mu_{T1}=\mu_{T2}\eqdef\mu_T=\mu_T(\A)$.
Moreover, using $\ell^\top p_0=0$ and $\ell^\top v>0$,
\[
\ell^\top p_k\mu_T + T(-\ell^\top p_k)^+ =
\left\{\begin{array}{ll}
  \eps \ell^\top v \mu_T & \mbox{if $k=1$},\\
  \eps \ell^\top v (T-\mu_T) & \mbox{if $k=2$}.
 \end{array} \right.
\]
Thus we have
\begin{align*}
 R_T^*(\bG_1)
&= \inf_\A\sup_{(J_1,\ldots,J_T)\in\NAs^T} R_T(\A,\bG_1)
 \ge \inf_\A\max_{k\in\underline{2}} (\ell^\top p_k\mu_T + T(-\ell^\top p_k)^+)\\
&= \eps \ell^\top v \inf_\A\max(\mu_T,T-\mu_T)
 \ge \eps \ell^\top vT/2,
\end{align*}
that is, $\bG_1$ is trivial.
\end{proof}

\bibliography{allbib,confs,biblio}
\bibliographystyle{plainnat}

\end{document}